\documentclass[11pt]{article}
\usepackage{natbib}
\usepackage{microtype}
\usepackage{graphicx}
\usepackage{subfigure} 
\usepackage{booktabs} 

\usepackage{hyperref}

\usepackage{times}
\usepackage{microtype}
\usepackage{graphicx}
\usepackage{booktabs}
\usepackage{hyperref}

\usepackage{amsmath}
\usepackage{amssymb}
\usepackage{mathtools}
\usepackage{amsthm}
\usepackage{caption}
\usepackage{stfloats}
\usepackage{enumitem}

\usepackage[capitalize,noabbrev]{cleveref}
\usepackage{subcaption}
\theoremstyle{plain}
\newtheorem{theorem}{Theorem}[section]
\newtheorem{proposition}[theorem]{Proposition}
\newtheorem{lemma}[theorem]{Lemma}
\newtheorem{corollary}[theorem]{Corollary}
\theoremstyle{definition}

\theoremstyle{remark}

\usepackage[textsize=tiny]{todonotes}
\usepackage{authblk}
\usepackage[most]{tcolorbox}
\usepackage[margin=1in]{geometry}

\begin{document}

\title{Data Difficulty and the Generalization--Extrapolation Tradeoff in LLM Fine-Tuning}
\author{
  Siyuan Liu$^{*\dagger 1}$,
  Tinghong Chen$^{*2,3}$,
  Xinghan Li$^{*1}$,
  Yifei Wang$^{4}$,
  Jingzhao Zhang$^{1,3}$ \\
  {\small
  $^1$IIIS, Tsinghua University \quad
  $^2$College of AI, Tsinghua University \\
  $^3$Shanghai Qi Zhi Institute \quad
  $^4$Amazon AGI SF Lab }
}

\date{}
\maketitle

\begingroup
\renewcommand\thefootnote{}
\footnotetext{* Equal contribution.}
\footnotetext{$\dagger$ Contact: Siyuan at liusiyua23@mails.tsinghua.edu.cn, 
Jingzhao at jingzhaoz@mail.tsinghua.edu.cn.}
\endgroup
\renewcommand{\thefootnote}{\arabic{footnote}}




\begin{abstract}
Data selection during supervised fine-tuning (SFT) can critically change the behavior of large language models (LLMs). 
Although existing work has studied the effect of selecting data based on heuristics such as perplexity, difficulty, or length, the reported findings are often inconsistent or context-dependent. In this work, we systematically study the role of data difficulty in fine-tuning from both empirical and theoretical perspectives, and find that there is no universally optimal difficulty level; rather, its effectiveness depends on the dataset size. We show that for a fixed data budget, there exists an optimal data difficulty for SFT, and that this optimal difficulty shifts toward harder data as the data budget increases. To explain this phenomenon, we conduct controlled synthetic experiments that reveal a simple underlying mechanism: the interplay between the (in-distribution) generalization gap and the extrapolation gap. We further support this mechanism through a theoretical analysis using PAC-Bayesian generalization bounds. Overall, our results clarify how data size and difficulty jointly affect the trade-off between generalization and extrapolation in SFT, providing guidance for difficulty-based data selection under certain model and data conditions.
\end{abstract}

\section{Introduction}
Selecting training data for supervised fine-tuning (SFT) has a large impact on downstream performance of large language models. Among the various heuristics used in practice, \textbf{data difficulty}, which reflects both the complexity of a question and the learnability of its solution, is both intuitive and controversial. On one hand, many studies suggest that overly easy data are uninformative and should be removed, and that harder examples are more useful for learning \cite{marion2023less, xia2024less, muennighoff2025s1, ye2025limo}. On the other hand, another line of work argues that fine-tuning should stay close to the training distribution of the base model, suggesting that easier training samples may lead to better generalization \cite{zhang2025best, de2022bertin, ren2024learn, wu2025generalizationsftreinforcementlearning}. A related third view suggests that data of intermediate difficulty can yield the best performance \cite{marion2023less, qi2025difficulty}. 

\begin{table}[t] 
\begin{center} 
\begin{small} 
\begin{sc} 
\caption{SFT results of Qwen2.5-Math-1.5B~\citep{yang2024qwen2} on easy, medium, and hard subsets of OpenR1-Math-94k~\cite{openr1}, OpenMath~\cite{moshkov2025aimo2}, and OpenScience~\cite{open_science_reasoning_2}. We measure difficulty using Chain-of-Thought (CoT) length. Models trained on math datasets are evaluated on Math500, while those trained on the science dataset are evaluated on MMLU.}
\label{tab:motivation1} 

\begin{tabular}{lccr}
\toprule
Dataset & easy & medium & hard \\
\midrule
OpenR1-Math-94k & 61.1 & \textbf{68.3} & 61.7 \\
OpenMath (200k subset) & \textbf{71.7} & 70.1 & 69.0 \\
OpenScience (200k subset) & \textbf{53.4} & 53.0 & 41.2 \\
\bottomrule
\end{tabular}
\end{sc} 
\end{small} 
\end{center} 
\vskip -0.1in 
\end{table}
These conflicting findings motivate us to re-examine data difficulty through a series of preliminary experiments across different datasets and evaluation tasks. We split each dataset into three equal subsets based on difficulty and evaluate models fine-tuned on each subset. Typically, data difficulty is measured by metrics such as perplexity, correctness, or trajectory length. Here we measure difficulty using ground-truth Chain-of-Thought (CoT) length, following prior studies \cite{cheng2025adaptivellm, yu2025long}, and justify this choice in Figure~\ref{fig:length_difficulty}; additional discussion on difficulty metrics is provided in Appendix~\ref{appendix:difficulty_metric}.

Table~\ref{tab:motivation1} shows that the optimal difficulty for SFT is \textit{not universal}. Even with the same model, the same evaluation tasks, and similar data, the optimal data difficulty varies across different datasets. This observation raises an important open question: 
\begin{quote}
     \emph{Under what conditions should SFT prioritize easy, hard, or intermediate-difficulty data?}
\end{quote}

\begin{figure*}[t]
\begin{minipage}[t]{0.48\linewidth}
    \centering
    \includegraphics[width=\linewidth]{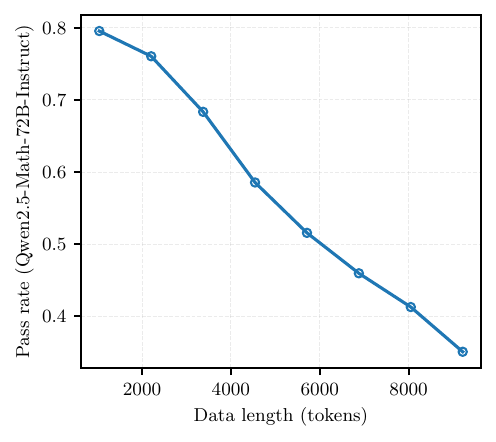}
    \caption{Relationship between data difficulty measured by ground-truth CoT length and pass rate obtained from an external LLM. Results come from the OpenMath dataset, where pass rate is computed from 32 sampled generations using Qwen2.5-Math-72B-Instruct under the Tool-Integrated Reasoning (TIR) mode. Longer CoT lengths are associated with a lower pass rate, indicating higher problem difficulty.}
    \label{fig:length_difficulty}
    \end{minipage}
    \hfill
\begin{minipage}[t]{0.48\linewidth}
    \centering
    \includegraphics[width=\linewidth]{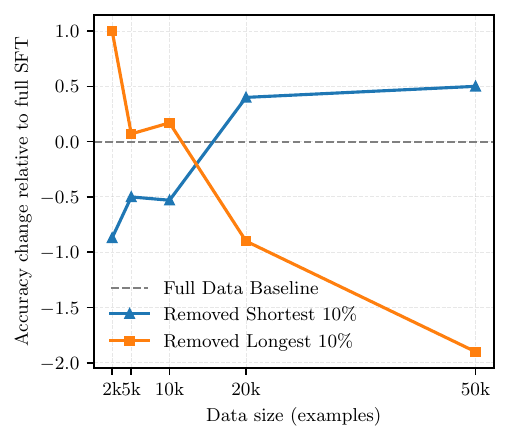}
    \caption{Comparison between SFT with full data and with the shortest or longest 10\% of training examples removed. The x-axis corresponds to different training set sizes from OpenR1-Math-94k.
    The y-axis shows the difference in Math500 accuracy relative to full-data SFT.
    Removing hard examples helps in the small-data regime, while removing easy examples helps in the large-data regime.}
    \label{fig:filter}
\end{minipage}
\end{figure*}

Further analysis reveals that many of these conflicting findings can be resolved by considering a previously under-explored factor: \textbf{training data size}. Figure~\ref{fig:filter} shows a notable pattern: removing hard data improves performance on small data scales but degrades it as the data size increases, while removing easy data provides little benefit on small scales but becomes beneficial on larger scales. This finding is consistent with recent observations that large-scale data filtering often offers little advantage over random selection for instruction tuning \cite{xia2024rethinking}, but existing work has not provided a principled explanation for this behavior.

Motivated by this dependence on data size, we systematically study how the interplay between data size and data difficulty affects SFT performance. 
Our experiments reveal a clear pattern: \textbf{there exists an optimal data difficulty for SFT, which increases as the available training data grows}. This pattern explains the contrasting effects of filtering easy versus hard examples across different data regimes (Figure~\ref{fig:filter}) and provides practical guidance for selecting training data to maximize SFT performance.

To further explain the underlying factors behind this behavior, we conduct a series of controlled synthetic experiments, which reveal that the observed trend arises from the tradeoff between two fundamental factors: the (in-distribution) \textbf{generalization gap}, namely the performance degradation caused by imperfect generalization within the training distribution, and the \textbf{extrapolation gap}, namely the performance degradation induced by the distribution shift between the training data and harder, out-of-distribution test cases. When training data are overly easy, the extrapolation gap dominates and limits extrapolation to challenging test problems; when data are excessively hard, the in-distribution generalization gap becomes dominant, which leads to an overall performance drop, while increasing data size can reduce both gaps. The balance between these two effects gives rise to an optimal data-size-dependent difficulty for supervised fine-tuning. Beyond empirical evidence, we further provide a brief theoretical analysis based on PAC-Bayesian bounds, which establishes an error decomposition into the generalization gap and the extrapolation gap. 

Together, our findings provide a novel view of how data difficulty and data scale influence SFT performance. They suggest that effective data selection for supervised fine-tuning should move beyond simply favoring easy or hard data and instead adapt data difficulty to the available data scale and model capability. By connecting empirical phenomena with theoretical analysis, our work advances a principled understanding of how data properties affect learning in large language models and offers guidance for designing training datasets to maximize SFT performance.

\section{Related Work}

\paragraph{Metric-based data selection for supervised fine-tuning.}
Some prior work advocates for removing examples that are not informative to the base model. For example, LIMO \cite{ye2025limo} and s1 \cite{muennighoff2025s1}  curate data pools by discarding examples that the base model can already solve correctly. LESS \cite{xia2024less} selects training examples based on their estimated influence on validation loss. 

Another class of methods focuses on data difficulty, using heuristics such as perplexity or response length. BERTIN \cite{de2022bertin} selects examples with intermediate perplexity, where the perplexity is calculated using a fixed $n$-gram language model. \citet{marion2023less} show that pruning data by removing easy instances identified by low perplexity can improve performance. 
\citet{zhang2025best} show that when multiple responses generated by different LLMs are available for the same instruction, selecting in-distribution responses leads to better fine-tuning performance. \citet{cheng2025adaptivellm} use CoT length to estimate data difficulty for data selection. 

Other approaches rely on token-level reweighting, which  implicitly performs difficulty-based selection. Dynamic Fine-Tuning (DFT) \cite{wu2025generalizationsftreinforcementlearning} introduces token-level reweighting that favors high-probability tokens during training. Anchored Supervised Fine-Tuning \cite{zhu2025proximalsupervisedfinetuning} adds a KL regularization term to the DFT objective to mitigate the distributional drift issue observed in DFT.


\paragraph{Understanding the role of data in supervised fine-tuning.}
\citet{lin2025sft} argue that domain-specific SFT can degrade general-domain performance due to an emphasis on low-probability tokens. \citet{li2025loglikelihoodprobabilitybasedobjectives} study likelihood-based objectives, showing that while DFT-like token reweighting can be beneficial for in-distribution data, it often fails to outperform standard SFT under distribution shift. \citet{ren2024learn} observe that LLM-generated data are more suitable for SFT than human-annotated data, partly because they exhibit lower perplexity with respect to the base model, thus reducing distribution mismatch. 

\paragraph{Learning-theoretic views on adaptation and distribution shift.}
Our analysis is also related to learning-theoretic studies of adaptation, task complexity, and distribution shift. PAC-Bayesian theory provides a standard way to relate generalization to a posterior-prior divergence term, where the KL divergence can be interpreted as a complexity measure relative to a data-independent prior~\citep{Alquier_2024,10.1145/279943.279989,maurer2004notepacbayesiantheorem}. Information-theoretic analyses further connect task difficulty and transfer to the amount of task-specific information that must be encoded beyond a prior representation~\citep{achille2019information,harutyunyan2020improving,achille2019task2vec}. Separately, domain adaptation analyses often control target-domain risk by combining source-domain generalization with a distribution-shift term, commonly expressed through total variation or KL-based relaxations~\citep{nguyen2021kl}. Our work uses these tools to formalize an empirical mechanism in SFT: increasing data difficulty can reduce the extrapolation gap to challenging test cases, but may also increase the complexity of adaptation relative to the base model.

\vspace{3mm}
Previous studies provide valuable insights into specific factors such as perplexity, token probability, and data source. Our work instead seeks to understand the role of data difficulty by explicitly modeling its interaction with training data size. Existing data selection methods typically rely on fixed selection criteria based on difficulty or informativeness. Our work complements existing data selection approaches by providing a scaling-based understanding of when different types of training data are most effective: easier examples tend to be more effective in low-data regimes, while harder examples become increasingly beneficial as the data budget grows. This helps reconcile seemingly conflicting observations in the literature and provides a principled explanation for them.

\vspace{-1mm}
\section{Fine-grained Experiments: The Interplay between Data Difficulty and Size}
\label{sec:real_world_exp}

\begin{figure*}[t]
\centering

\begin{minipage}{0.48\textwidth}
\centering
\includegraphics[width=\linewidth]{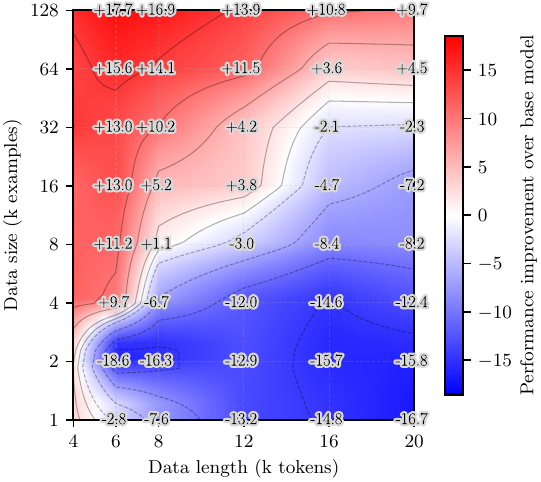}
\captionof*{figure}{(a) Base model Qwen-2.5-Math-1.5B}
\label{fig:real_world_exp_1.5b}
\end{minipage}
\begin{minipage}{0.48\textwidth}
\centering
\includegraphics[width=\linewidth]{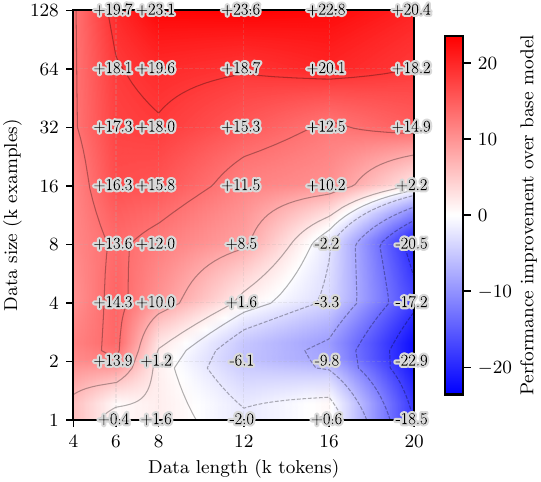}
\captionof*{figure}{(b) Base model Qwen-2.5-Math-7B}
\label{fig:real_world_exp_7b}
\end{minipage}
\vskip 0.1in
\caption{Performance gains over different base models as a function of data size and difficulty, trained on the OpenMath dataset. Performance is evaluated by the average accuracy (\%) on Math500, AIME24, and Minerva Math. Data difficulty is measured by ground-truth CoT length. }\vskip -0.15in
\label{fig:real_world_exp}

\end{figure*}
The effectiveness of filtering hard data depends strongly on the training data size, as we observed in Figure~\ref{fig:filter}. This observation indicates that data difficulty cannot be analyzed in isolation; instead, it is necessary to jointly consider both data size and difficulty to understand their overall influence.

Motivated by this, we perform a systematic two-dimensional study spanning a wide range of data sizes and difficulty levels. Specifically, we conduct SFT on the Qwen2.5-Math-1.5B and Qwen2.5-Math-7B models using the OpenMath dataset, controlling data difficulty by partitioning the training data into disjoint difficulty ranges based on ground-truth CoT length.  We then vary data size within each difficulty range. Training details are provided in Appendix~\ref{appendix:real_World}. 
Additional experiments on Llama models and science reasoning tasks are provided in Appendix~\ref{appendix:extension_exp}.

Figure~\ref{fig:real_world_exp} presents the overall performance landscape as a function of data size and data difficulty. Performance is evaluated by the average accuracy (\%) on Math500, AIME24, and Minerva Math to reduce variance, and we report accuracy improvements relative to the pretrained base model. From these results, we observe several clear trends.

\begin{figure*}[t]
\centering
\begin{minipage}{0.48\textwidth}
\centering
\includegraphics[width=\linewidth]{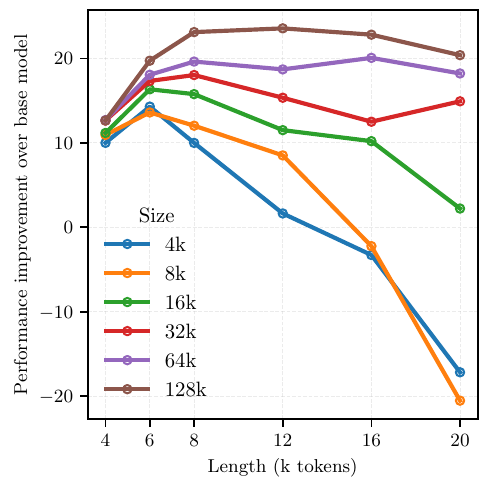}
\captionof*{figure}{(a) Performance gain as a function of data difficulty with data size held constant.}
\label{fig:slice_fixed_size}
\end{minipage}
\hfill
\begin{minipage}{0.48\textwidth}
\centering
\includegraphics[width=\linewidth]{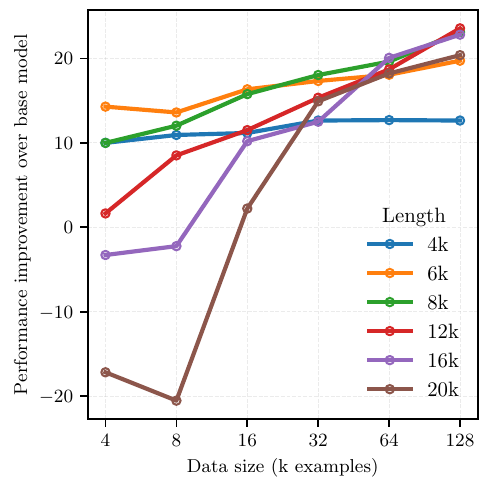}
\captionof*{figure}{(b) Performance gain as a function of data size with data difficulty held constant.}
\label{fig:slice_fixed_dif}
\end{minipage}

\caption{One-dimensional slices of the 2D data size–difficulty experiment on Qwen-2.5-Math-7B from Section~\ref{sec:real_world_exp}.}
\label{fig:slices_real_world}
\end{figure*}

\textbf{There is an optimal difficulty level for SFT.}
As Figure~\ref{fig:slices_real_world}(a) shows,
for a fixed data size, SFT performance does not change monotonically with difficulty. It first increases as the training data difficulty increases, reaches a clear peak at an intermediate difficulty level, and then drops when the data become too hard. This shows that neither very easy nor very hard data are optimal for SFT and that there exists an optimal difficulty level. 

\textbf{Performance increases and then saturates as data size grows.}
As Figure~\ref{fig:slices_real_world}(b) shows,
for a fixed difficulty range, SFT performance increases as the training set grows, but the gains gradually slow down and eventually saturate. After a certain point, adding more data no longer leads to meaningful improvements. We also observe that easy data reach saturation earlier than hard data, which is intuitive: easy examples are learned quickly, while harder ones provide more opportunity for improvement.

\textbf{Difficulty is relative to base model capability.}
As shown in Figure~\ref{fig:real_world_exp}, the effectiveness of SFT depends not only on the absolute difficulty of the data but also on the capability of the base model. A stronger base model benefits from a broader range of difficulty levels compared to a weaker model. For the same data size and difficulty range, a stronger model is more likely to learn successfully, while a weaker model may show limited gains. This suggests that SFT performance is primarily determined by the relative difficulty of the data with respect to the base model’s current capability. 

The above observations explain the conflicting observations in previous work on how SFT data should be filtered based on difficulty. Next, we aim to identify the underlying factors that drive the change in optimal data difficulty.

\section{Why Do These Behaviors Arise? 
Controlled Experiments on iGSM}

\begin{figure*}[t]
\centering
\begin{minipage}{0.48\textwidth}
\centering
\includegraphics[width=\linewidth]{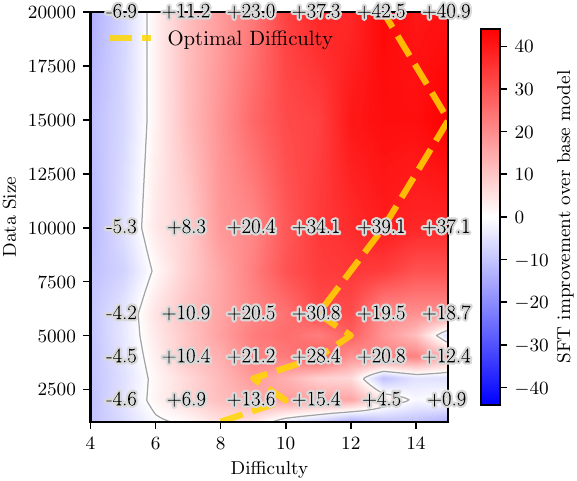}
\captionof*{figure}{(a) Base model mid-trained on ops in [2,6], 2000 samples each op (Ops[2--6]2k)}
\label{fig:iGSM_2d_2to6}
\end{minipage}
\hfill
\begin{minipage}{0.48\textwidth}
\centering
\includegraphics[width=\linewidth]{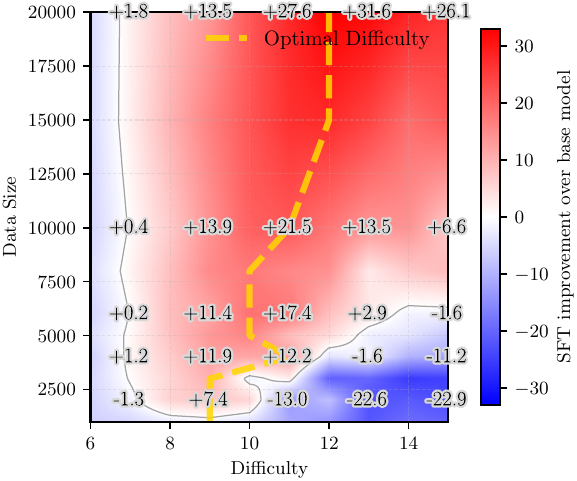}
\captionof*{figure}{(b) Base model mid-trained on ops in [2,8], 2000 samples each op (Ops[2--8]2k)}
\label{fig:iGSM_2d_2to8}
\end{minipage}

\caption{Performance gains over different base models on synthetic iGSM data as a function of data difficulty and data size. Data difficulty is measured by the number of operations (ops), and data size corresponds to the number of training samples.}
\label{fig:iGSM_2d}
\end{figure*}

To explain why the above non-monotonic trends arise, we turn to synthetic reasoning data generated using the iGSM framework~\citep{ye2024physics}. This framework generates a diverse set of grade-school math reasoning problems in a rule-based manner, allowing precise control over problem difficulty while maintaining consistent data quality and formatting. 

\subsection{Varying Data Size and Difficulty on iGSM Data}
\label{sec:synth_exp}

\paragraph{Experiment Setup}
In the iGSM problem generator, each problem is generated from an underlying dependency graph, and each vertex represents an arithmetic operation. The number of operations (\textit{ops}) determines the difficulty of each instance, allowing fine-grained control over problem difficulty and reasoning structure. The number of ops is also strongly correlated with CoT length, consistent with our previous difficulty metric. 


We use Qwen2.5-Math-1.5B as the base model. To familiarize the model with the task setting, we  pretrain the base model on a mid-training dataset consisting of problems whose op-counts are uniformly distributed within a specified range. The resulting model is then treated as the “base model” for subsequent SFT experiments. For simplicity, we denote “Ops[2--8]2k” as the base model mid-trained on data with ops in the range [2,8], using 2,000 samples per op for one epoch.


To evaluate a model's performance, we randomly select 50 samples for each op level in the range $[2, 20]$ to form the test set. We then investigate SFT under varying data sizes and data difficulties (measured by the number of ops). For each experimental configuration, we fine-tune models for three epochs using the same hyperparameters and report performance changes relative to the base model. Additional experimental details are provided in Appendix~\ref{appendix:iGSM}.

By analyzing the performance landscape across training configurations, we identify several recurring trends in how data difficulty affects supervised fine-tuning performance. We summarize these trends as four key observations below. 

\begin{tcolorbox}[breakable,
left=1mm,
title=Observations from iGSM Experiments]
    Figure~\ref{fig:iGSM_2d} reveals four consistent observations:

\begin{itemize}
    \item \textbf{Observation 1 (Data size effect).} For a fixed difficulty, SFT performance improves with increasing data size and then saturates.
    \item \textbf{Observation 2 (Non-monotonic difficulty effect).} For a fixed data size, performance first increases then decreases as difficulty increases, exhibiting an interior optimum.
    \item \textbf{Observation 3 (Shifting optimal difficulty).} The optimal difficulty that maximizes performance increases as the available training data size grows.
    \item \textbf{Observation 4 (Model-relative difficulty).} Difficulty is relative to model capability: as the base model becomes stronger, the boundaries \footnotemark and optimal difficulty all shift toward higher difficulty.
\end{itemize}
\end{tcolorbox}
\footnotetext{The boundary refers to the difficulty where SFT performance matches that of the base model on a certain data size.}
\vspace{-1mm}
These findings are consistent with those in Section~\ref{sec:real_world_exp}, but are more clearly observed in the controlled iGSM setting. In the following sections, we provide a principled explanation for Observations~1--4.

\subsection{The Generalization-Extrapolation Tradeoff in SFT}

\subsubsection{ Difficulty-Decomposed Evaluation}
\label{sec:two-gap}
To understand the underlying factors behind the above observations, we decompose the test set by difficulty and evaluate performance on each slice independently.

\begin{figure*}[t]
\centering

\begin{minipage}{0.48\textwidth}
\centering
\includegraphics[width=\linewidth]{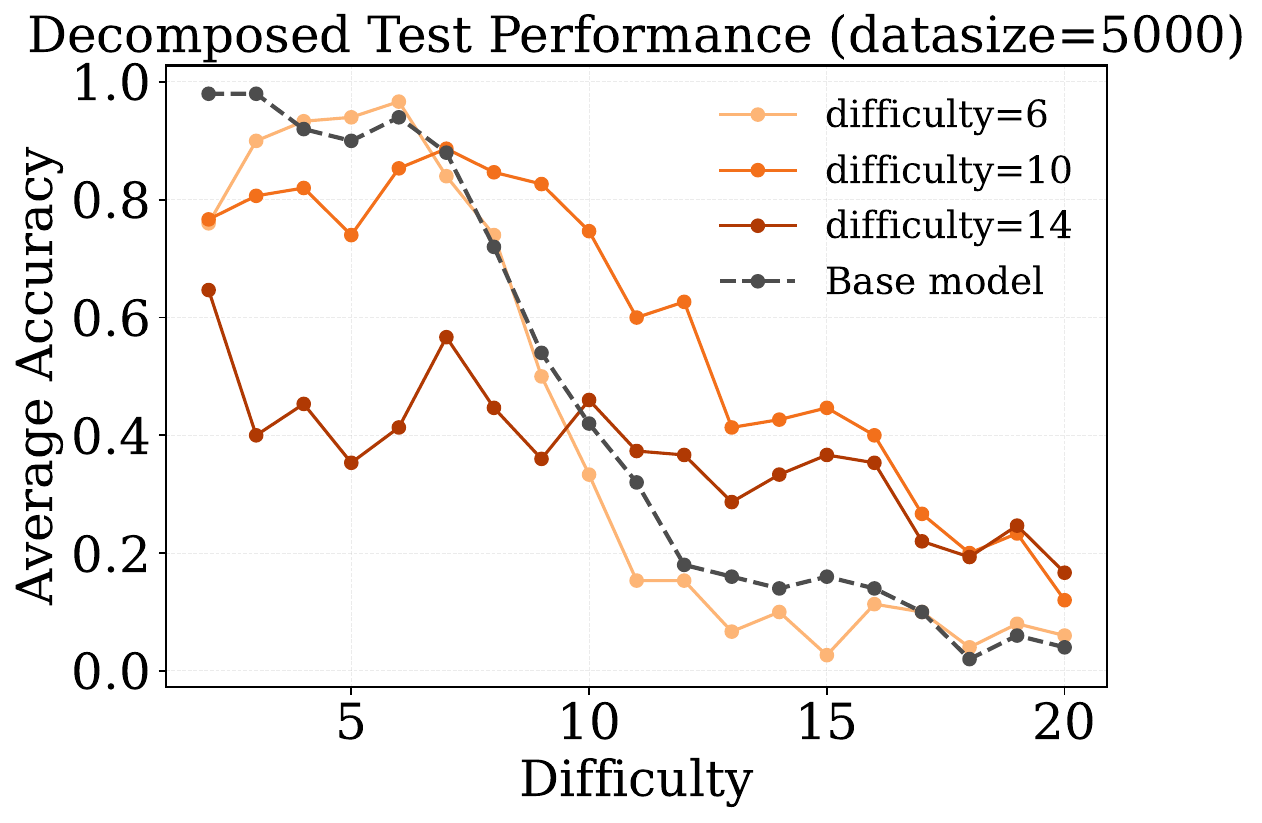}
\vskip -0.05in
\centerline{(a) Data size 5000}
\label{fig:slices-a}
\end{minipage}
\begin{minipage}{0.48\textwidth}
\centering
\includegraphics[width=\linewidth]{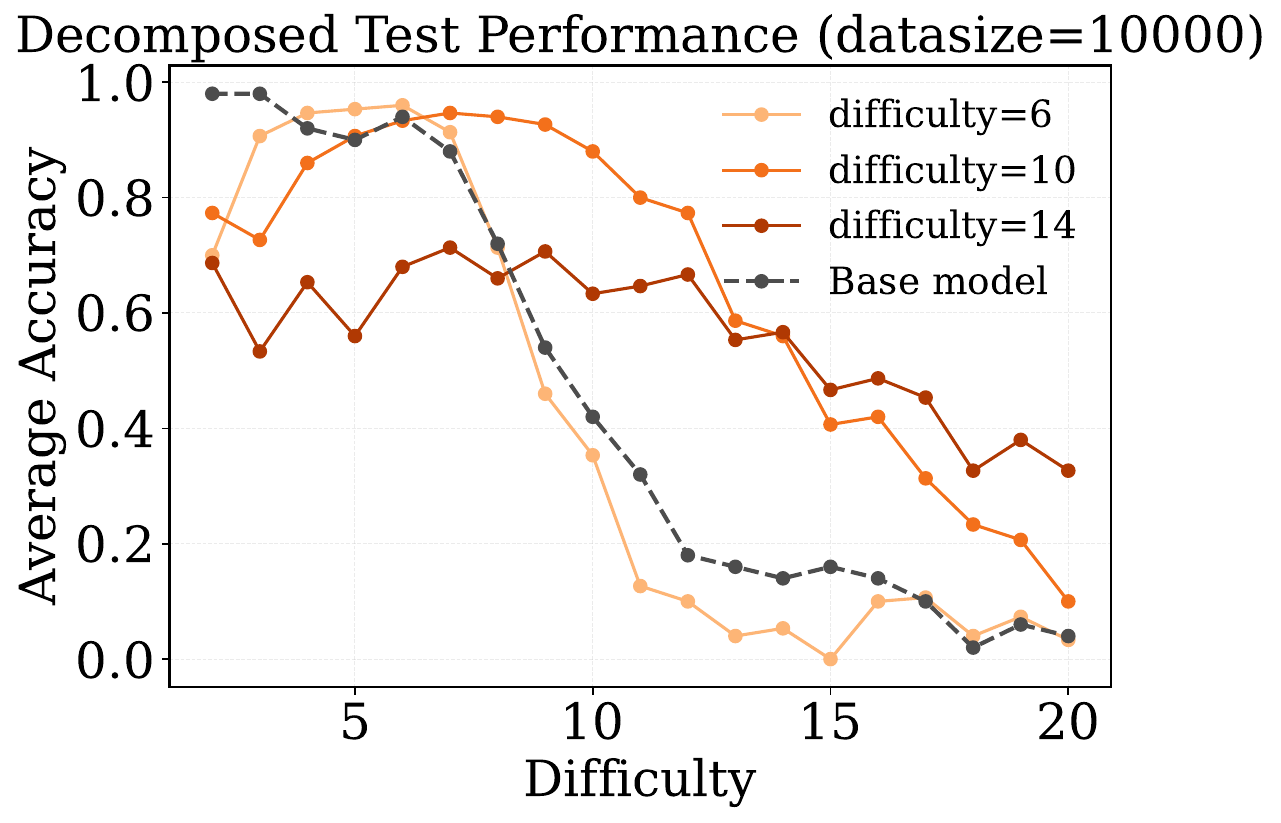}
\vskip -0.05in
\centerline{(b) Data size 10000}
\label{fig:slices-b}
\end{minipage}
\caption{Decomposed test results for SFT experiments on the base model Ops[2--8]2k under data sizes of 5k and 10k. Each curve represents an SFT model trained at a fixed difficulty (6, 10, or 14), evaluated across test difficulties from 2 to 20.}
\vskip -0.1in
\label{fig:slices}
\end{figure*}

Figure~\ref{fig:slices} reveals two distinct failure modes of SFT, depending on the data difficulty:
\begin{itemize}[leftmargin=*]
    \item When SFT is performed on relatively easy data, the model improves on easy test instances but exhibits degraded performance on harder ones, resulting in an overall performance drop. This suggests that the model fits the training distribution well but fails to transfer this improvement to more difficult test cases, indicating a large \textbf{train--test mismatch}.
    \item In contrast, when SFT is performed on very difficult data, the performance degrades uniformly across all test difficulty levels. This behavior indicates that the model fails to effectively learn from the training distribution itself due to insufficient data, leading to poor \textbf{in-distribution generalization}.
\end{itemize}
As the training data size increases, fine-tuning on harder examples begins to improve performance on harder test instances, whereas fine-tuning on easy data yields limited gains. These observations motivate a two-gap perspective on SFT. We formalize it in the following and use it to explain the behaviors observed above.

\begin{figure*}[t]
\centering

\begin{minipage}{0.48\textwidth}
\centering
\includegraphics[width=\linewidth]{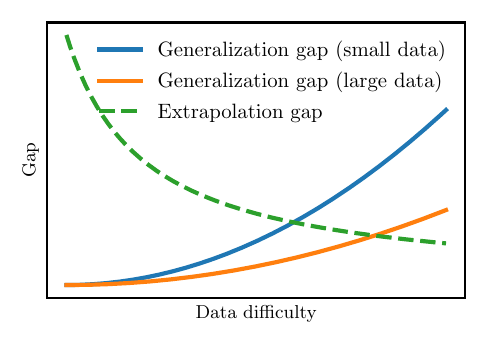}
\vskip -0.05in
\captionof*{figure}{(a) Generalization and extrapolation gaps as functions of data difficulty for small and large datasets.}
\label{fig:two-gap-illustration-a}
\end{minipage}
\begin{minipage}{0.48\textwidth}

\centering
\includegraphics[width=\linewidth]{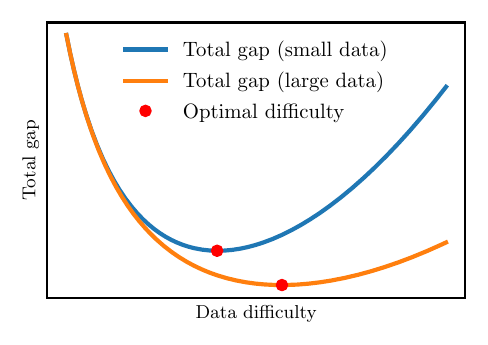}
\vskip -0.05in
\captionof*{figure}{(b) Total performance gap (sum of generalization and extrapolation gaps).}
\label{fig:two-gap-illustration-b}
\end{minipage}

\caption{Illustration of the two-gap decomposition in SFT. The generalization gap rises with difficulty, while the extrapolation gap falls. This trade-off gives rise to an interior optimal difficulty. Larger datasets reduce the generalization gap and shift the optimal difficulty toward harder data.}
\label{fig:two-gap-illustration}
\end{figure*}


Specifically, let $\mathcal{D}_{\mathrm{train}}$ and $\mathcal{D}_{\mathrm{test}}$ denote the post-training SFT data distribution and the downstream test distribution. The downstream performance of a fine-tuned model can be roughly viewed as the combination of two components, as illustrated in Figure~\ref{fig:two-gap-illustration}.

\begin{itemize}
    \item \textbf{In-distribution generalization gap}\footnote{
    Since extrapolation is a form of out-of-distribution generalization, throughout this paper we use “generalization gap” to refer specifically to the in-distribution (IID) generalization gap on $\mathcal{D}_{\mathrm{train}}$, unless otherwise specified.}, which quantifies how well the fine-tuned model generalizes on $\mathcal{D}_{\mathrm{train}}$ beyond the finite SFT data samples.
    \item \textbf{Extrapolation gap}, which captures the mismatch between $\mathcal{D}_{\mathrm{train}}$ and $\mathcal{D}_{\mathrm{test}}$, i.e., how far one must ``extrapolate'' from the post-training distribution to perform well on the test distribution.
\end{itemize}
Note that in all experiments the training loss has already converged, suggesting that the degradation observed on very hard data under limited data budgets is not caused by simple optimization non-convergence. Instead, it mainly comes from poor in-distribution generalization, which induces forgetting of previously learned capabilities.
\subsubsection{Evidence from learning theory}
We further provide a formal setup in statistical learning that models this decomposition, which in turn provides insight into the observations in Section~\ref{sec:synth_exp}.


\paragraph{Data Distributions and Risks.}
Let $\mathcal{D}_{\mathrm{train}}$, and $\mathcal{D}_{\mathrm{test}}$ denote the post-training and test distributions over $\mathcal{Z}:=\mathcal{X}\times\mathcal{Y}$, respectively.
We view a language model as an autoregressive conditional family $\{p_\theta(\cdot\mid x)\}_{\theta\in\Theta}$.

For an example $z=(x,y)$ with target sequence $y=(y_1,\ldots,y_T)$, we use the length-normalized token NLL
\begin{equation}
  \ell(\theta; z)
  :=
  -\frac{1}{T}\sum_{t=1}^{T}\log p_\theta\!\left(y_t \mid x, y_{<t}\right).
  \label{eq:token_nll}
\end{equation}
For any distribution $\mathcal{D}$ over $\mathcal{Z}$ and sample set $S$ sampled from $\mathcal{Z}$, define the population and empirical risks as
\begin{align}
  R_{\mathcal{D}}(\theta)
  &:= \mathbb{E}_{z\sim\mathcal{D}}\big[\ell(\theta;z)\big],
  \widehat R_{S}(\theta)
  := \frac{1}{|S|}\sum_{z\in S}\ell(\theta;z).
  \label{eq:risk_defs}
\end{align}

\paragraph{Parameter Distributions.}
Although in experiments the base model is a fixed parameter vector, for analysis it is convenient to model
the outcome of pre-training as a random variable $\theta_{\mathrm{pre}}$ whose randomness captures
sources such as random initialization and stochastic optimization.
We denote the distribution of $\theta_{\mathrm{pre}}$ by $\pi_{\mathrm{pre}}$. Similarly, we take into consideration the randomness in post-training algorithms. Given a set of post-training samples $S\sim \mathcal{D}_{\mathrm{train}}^n$, we denote the parameter yielded from a post-training algorithm by a random variable $\theta_{\mathrm{train}}$; We denote the distribution of $\theta_{\mathrm{train}}$ by $\pi_{\mathrm{train}}$.

Now we are ready to state the formal characterization of the two-gap decomposition. See Appendix~\ref{appendix:two_gap_theory} for a quick proof.

\begin{proposition}[Two-gap upper bound]
\label{prop:two_gap_formal}
Fix $\delta\in(0,1)$ and $n\in\mathbb{N}$.
Let $\pi_{\mathrm{pre}}$ be a distribution over $\Theta$ that is independent of the sample set $S$.
Assume the loss is uniformly bounded: there exists $C>0$ such that
$\ell(\theta;z)\in[0,C]$ for all $\theta\in\Theta$ and $z\in\mathcal{Z}$.
Draw $S=\{z_i\}_{i=1}^n \stackrel{\mathrm{i.i.d.}}{\sim} \mathcal{D}_{\mathrm{train}}^n$.
Then with probability at least $1-\delta$ over the draw of $S$, the following holds simultaneously for all
posteriors $\pi_{\mathrm{train}}$ over $\Theta$ (which may depend on $S$):
\begin{align}
  &\mathbb{E}_{\theta\sim\pi_\mathrm{train}}\!\big[ R_{\mathcal{D}_{\mathrm{test}}}(\theta) \big]
  \;\le\;
  \mathbb{E}_{\theta\sim\pi_\mathrm{train}}\!\big[ \widehat R_{S}(\theta) \big]
  \nonumber\\
  &+
  C\sqrt{
    \frac{
      \mathrm{KL}(\pi_\mathrm{train}\|\pi_\mathrm{pre})
      + \ln\!\left(\frac{1}{\delta}\right)
      + \frac{5}{2}\ln(n) + 8
    }{2n-1}
  }
  \nonumber\\
  &+
  C \cdot \mathrm{TV}\!\big(\mathcal{D}_{\mathrm{test}},\mathcal{D}_{\mathrm{train}}\big).
  \label{eq:two_gap_bound_compact_app}
\end{align}
\end{proposition}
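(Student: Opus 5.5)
The plan is to split the test risk into a distribution-shift term and an in-distribution term, and then control the in-distribution term with a standard PAC-Bayesian bound. Concretely, for every $\theta$,
\[
R_{\mathcal{D}_{\mathrm{test}}}(\theta) = R_{\mathcal{D}_{\mathrm{train}}}(\theta) + \big(R_{\mathcal{D}_{\mathrm{test}}}(\theta) - R_{\mathcal{D}_{\mathrm{train}}}(\theta)\big).
\]
Since $\ell(\theta;\cdot)$ takes values in $[0,C]$, the difference of expectations of a $[0,C]$-valued function under two distributions is at most $C\cdot\mathrm{TV}(\mathcal{D}_{\mathrm{test}},\mathcal{D}_{\mathrm{train}})$. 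The shift by the constant $C/2$ is what lets us use the sup over $[0,C]$-valued functions rather than $|f|\le C$, which would give $2C$. This deterministic bound holds for every $\theta$, so it survives taking $\mathbb{E}_{\theta\sim\pi_{\mathrm{train}}}$ for any posterior. That yields the last term with no probabilistic argument and uses no property of $S$.

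It then remains to show that, with probability at least $1-\delta$ and simultaneously for all posteriors,
\[
\mathbb{E}_{\pi_{\mathrm{train}}}[R_{\mathcal{D}_{\mathrm{train}}}(\theta)] \le \mathbb{E}_{\pi_{\mathrm{train}}}[\widehat R_S(\theta)] + C\sqrt{\frac{\mathrm{KL}(\pi_{\mathrm{train}}\|\pi_{\mathrm{pre}})+\ln(1/\delta)+\tfrac52\ln n+8}{2n-1}}.
\]
The data-independent prior is $\pi_{\mathrm{pre}}$. After rescaling the loss to $\ell/C\in[0,1]$, this is a McAllester-type bound, and the stated constants ($2n-1$ in the denominator, $\tfrac52\ln n + 8$) are exactly those of McAllester's 1999 refinement (the \citet{10.1145/279943.279989} bound). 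I would invoke that theorem directly, multiply back by $C$, and add the TV term.

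If a self-contained route is needed, I would follow the standard recipe. First, apply the Donsker--Varadhan change of measure with $f(\theta)=(2n-1)\,\Delta(\theta)^2$, where $\Delta = R_{\mathcal{D}_{\mathrm{train}}}-\widehat R_S$ on the $[0,1]$ scale. Second, bound $\mathbb{E}_S\mathbb{E}_{\pi_{\mathrm{pre}}} e^{f}$ by a polynomial in $n$ using Hoeffding-type tail bounds integrated against the exponential; this step relies on the prior being independent of $S$, so Fubini applies. Third, apply Markov's inequality to get a $1-\delta$ event that holds uniformly over posteriors. Fourth, use Jensen's inequality $(\mathbb{E}\Delta)^2\le\mathbb{E}\Delta^2$ and take a square root.

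The main obstacle is reproducing the exact constants $\tfrac52\ln n+8$ and $2n-1$. The Maurer/Seeger route yields the cleaner $\ln(2\sqrt n/\delta)/(2n)$ instead. I would therefore rely on citing McAllester's theorem for those constants, noting that any of these standard variants gives the same qualitative two-gap decomposition.
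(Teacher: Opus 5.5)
Your proposal is correct and follows the paper's proof: a pointwise bound $|R_{\mathcal{D}_{\mathrm{test}}}(\theta)-R_{\mathcal{D}_{\mathrm{train}}}(\theta)|\le C\cdot\mathrm{TV}(\mathcal{D}_{\mathrm{test}},\mathcal{D}_{\mathrm{train}})$ from the variational characterization of total variation, combined with the McAllester-type PAC-Bayes bound applied to $\ell/C$ and rescaled. The paper cites that bound as Theorem~3.1 of \citet{Alquier_2024}, in its form for $[0,1]$-valued losses rather than McAllester's original 0-1 statement. Your worry about matching the exact constants is unnecessary: the Donsker--Varadhan route you sketch, with two-sided Hoeffding, gives $\mathbb{E}_S\mathbb{E}_{\pi_{\mathrm{pre}}}e^{(2n-1)\Delta^2}\le 4n-1$, hence $\ln(4n)$ in place of $\tfrac52\ln n+8$, and since $\ln(4n)\le\tfrac52\ln n+8$ for all $n\ge 1$ this already implies the stated bound.
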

By Pinsker's inequality, the last term can also be upper-bounded by
$C\sqrt{\mathrm{KL}(\mathcal{D}_{\mathrm{test}}\|\mathcal{D}_{\mathrm{train}})/2}$.
We use the TV form as the primitive statement because it remains well-defined even under large distribution shifts, while the KL form is most meaningful when $\mathcal{D}_{\mathrm{test}}$ is a moderate shift of $\mathcal{D}_{\mathrm{train}}$ and is absolutely continuous with respect to it. This KL relaxation is standard in domain adaptation analyses~\citep{nguyen2021kl}.

Proposition~\ref{prop:two_gap_formal} suggests that, with high probability $\delta$, the gap between the expected risks at train time and test time can be written in a form of
\begin{align*}
    \mathbb{E}_{\theta\sim\pi_\mathrm{train}}\!\big[ R_{\mathcal{D}_{\mathrm{test}}}(\theta) - \widehat R_{S}(\theta) \big] \leq G_\mathrm{gen} + G_\mathrm{ext} + \epsilon,
\end{align*}
where $G_\mathrm{gen} = \mathcal{O}(\sqrt{\mathrm{KL}(\pi_\mathrm{train}\|\pi_\mathrm{pre}) / n})$ upper-bounds the in-distribution generalization gap, $G_\mathrm{ext} = \mathcal{O}(\mathrm{TV}(\mathcal{D}_{\mathrm{test}},\mathcal{D}_{\mathrm{train}}))$ upper-bounds the extrapolation gap, and the residual term $\epsilon = \mathcal{O}(\sqrt{\ln(n/\delta)/n})$. Equivalently, under the KL relaxation above, $G_\mathrm{ext}=\mathcal{O}(\sqrt{\mathrm{KL}(\mathcal{D}_{\mathrm{test}}\|\mathcal{D}_{\mathrm{train}})})$. Taking derivatives shows that $\epsilon$ decreases with $n$ as long as $n \geq 3 = \lceil e \rceil$, a threshold that all data sizes in our experiments far exceed. Next, we provide qualitative explanations for all four observations mentioned in Section~\ref{sec:synth_exp}.

\subsection{Explaining Empirical Observations}

Proposition~\ref{prop:two_gap_formal} follows from known PAC-Bayesian generalization bounds but provides a decomposition adapted to the LLM SFT scenario, where the post-training distribution can differ substantially from the downstream test distribution. Importantly, the proposition itself only states that the bound depends on the posterior-prior KL term and the train-test distribution shift; it does not imply that harder data must inherently increase the KL term. Our use of the bound is therefore interpretive: for a fixed pre-trained prior, a harder adaptation task may require the learned posterior to incorporate more task-specific information beyond the base model, which can result in a larger posterior-prior divergence. This view is consistent with the standard PAC-Bayesian interpretation of posterior-prior KL as a complexity term, as well as with information-theoretic perspectives on task complexity, memorized information, and task representations~\citep{achille2019information,harutyunyan2020improving,achille2019task2vec}.

Under this interpretation, the two leading terms can vary in opposite directions as training difficulty increases. The generalization term $G_\mathrm{gen} = \mathcal{O}(\sqrt{\mathrm{KL}(\pi_\mathrm{train}\|\pi_\mathrm{pre}) / n})$ may increase when fitting harder data requires a larger task-specific adaptation from the base model. On the other hand, the extrapolation term $G_\mathrm{ext} = \mathcal{O}(\mathrm{TV}(\mathcal{D}_{\mathrm{test}},\mathcal{D}_{\mathrm{train}}))$, or its KL relaxation, tends to decrease when harder training data better cover the downstream test cases. Together, these two terms provide a simple explanation for the observed empirical behaviors, as illustrated in Figure~\ref{fig:two-gap-illustration}, and discussed below.

\vspace{-0.1em}
\paragraph{Explanation of Observation 1 (Data size effect).}
When we fix the difficulty of the training data, $\mathcal{D}_{\mathrm{train}}$ is fixed.
Increasing the data size $n$ reduces $G_{\mathrm{gen}}$ and the residual term $\epsilon$ , while leaving $G_{\mathrm{ext}}$ essentially unchanged since it depends only on the mismatch between $\mathcal{D}_{\mathrm{train}}$ and $\mathcal{D}_{\mathrm{test}}$.
Therefore, the test risk decreases as $n$ grows.
Once $n$ is large enough, the generalization-related terms become negligible and the remaining error is dominated by the irreducible extrapolation gap $G_{\mathrm{ext}}$, leading to saturation.

\vspace{-0.5em}
\paragraph{Explanation of Observation 2 (Non-monotonic difficulty effect).}
When $n$ is fixed, varying the difficulty changes the post-training distribution $\mathcal{D}_{\mathrm{train}}$ and therefore the SFT solution distribution $\pi_{\mathrm{train}}$.
When post-training data are overly easy, $\mathcal{D}_{\mathrm{train}}$ can be poorly aligned with the downstream test distribution $\mathcal{D}_{\mathrm{test}}$, resulting in a large extrapolation gap $G_{\mathrm{ext}}$.
As the difficulty increases in this regime, $\mathcal{D}_{\mathrm{train}}$ becomes closer to the harder components of $\mathcal{D}_{\mathrm{test}}$, so $G_{\mathrm{ext}}$ tends to decrease.
However, learning from harder post-training examples may require making larger changes to the base model to fit the training data, which can increase $\mathrm{KL}(\pi_{\mathrm{train}}\|\pi_{\mathrm{pre}})$ and thus enlarge $G_{\mathrm{gen}}$.
The tradeoff between a decreasing $G_{\mathrm{ext}}$ and an increasing $G_{\mathrm{gen}}$ yields a unimodal dependence on difficulty with an interior optimum.

\vspace{-0.1em}
\paragraph{Explanation of Observation 3 (Shifting optimal difficulty).}
As the data size $n$ increases, $G_{\mathrm{gen}}$  shrinks due to its $1/\sqrt{n}$ dependence, for any fixed post-training distribution $\mathcal{D}_{\mathrm{train}}$.
In contrast, $G_{\mathrm{ext}}$ is controlled by the mismatch between $\mathcal{D}_{\mathrm{train}}$ and $\mathcal{D}_{\mathrm{test}}$ and is insensitive to $n$.
Therefore, with larger $n$, the cost of learning from harder post-training distributions (described by $G_{\mathrm{gen}}$) becomes less restrictive, and the optimum shifts towards harder $\mathcal{D}_{\mathrm{train}}$ that better reduce $G_{\mathrm{ext}}$.
This explains why the empirically optimal difficulty increases with the available data budget.

\vspace{-0.1em}
\paragraph{Explanation of Observation 4 (Model-relative difficulty).}
Difficulty should be interpreted relative to the base model. 
A stronger base model corresponds to a more informative prior $\pi_{\mathrm{pre}}$, which means that fitting a given post-training distribution may require a smaller change in the induced posterior $\pi_{\mathrm{train}}$, leading to a smaller $\mathrm{KL}(\pi_{\mathrm{train}}\|\pi_{\mathrm{pre}})$ and hence a smaller $G_{\mathrm{gen}}$.
At the same time, the extrapolation term $G_{\mathrm{ext}}$ is defined at the distribution level and does not explicitly depend on model capability.
As a result, improving the base model primarily relaxes the generalization-side constraint, allowing the model to benefit from harder post-training data without incurring an excessive generalization penalty.
Consequently, the optimal difficulty shifts towards higher difficulty as the base model becomes stronger.

\section{Beyond Sentence-Level Data Selection: 
A Case Study on DFT}

In addition to selecting data at sentence level, recent studies have also explored token-level mechanisms that implicitly modulate training difficulty based on model confidence. 
A representative recent example is Dynamic Fine-Tuning (DFT) \cite{wu2025generalizationsftreinforcementlearning}, which reweights token-level gradients according to token probabilities in order to mitigate the skewed reward signal inherent in SFT. 
Concretely, by scaling $\nabla \log p_{\theta}$ with a factor $sg(p_{\theta})$, DFT emphasizes tokens that the model already assigns higher probability, i.e., tokens that are easy from the model’s perspective. 
Although DFT has been reported to produce consistent improvements over standard SFT in reasoning tasks, subsequent studies suggest that such gains are context-dependent and not universally guaranteed \cite{li2025loglikelihoodprobabilitybasedobjectives}.

To reconcile these findings and examine whether DFT can alleviate the overfitting and catastrophic forgetting observed in SFT when training on overly easy or overly difficult data, we evaluate DFT under the same controlled synthetic setup and analyze how its behavior compares to SFT across different data sizes and difficulty levels.

\begin{figure}[t]
\centering

\begin{minipage}{0.48\linewidth} 
\centering
\includegraphics[width=\linewidth,keepaspectratio]{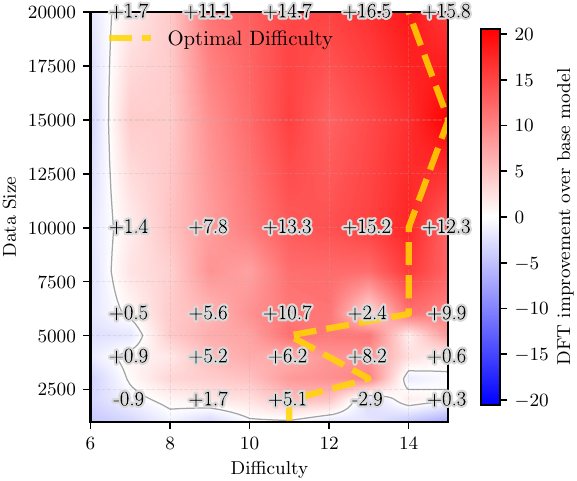}
\captionof*{figure}{(a) DFT -- Base model}
\label{fig:dft-base}
\end{minipage}
\hfill
\begin{minipage}{0.48\linewidth} 
\centering
\includegraphics[width=\linewidth,keepaspectratio]{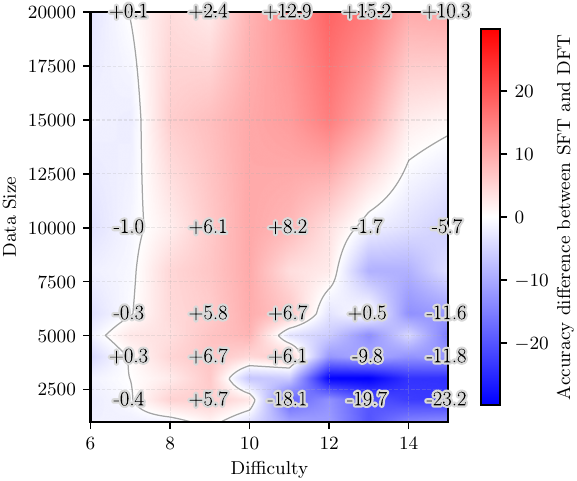}
\captionof*{figure}{(b) SFT -- DFT}
\label{fig:sft-dft}
\end{minipage}

\vskip 0.05in
\caption{
DFT performance on synthetic iGSM data (base model Ops[2--8]2k) across various data difficulty and data sizes.
}
\label{fig:dft}
\end{figure}

Figure~\ref{fig:dft} shows that DFT exhibits trends similar to SFT, such as performance improving with increasing data size before eventually saturating. However, because DFT favors easy tokens through its token-level reweighting objective, it incurs a smaller generalization gap. As a result, DFT is less sensitive to data difficulty and can outperform SFT when the training data are too difficult relative to the limited data size. 

However, due to its objective bias toward high-probability tokens, DFT has limited capacity to effectively fit highly complex data. Consequently, its performance saturates more quickly as data size increases and is eventually surpassed by SFT once sufficient training data are available.

\section{Conclusion}
We study how data difficulty and data size jointly affect supervised fine-tuning. Our experiments show that there exists an optimal data difficulty for SFT, and that this optimal difficulty increases as the available data size grows. This finding reconciles conflicting results in prior work regarding whether easy or hard data yield better performance. We explain this behavior through the interaction between the IID generalization gap and the extrapolation gap, and support our explanation with controlled synthetic experiments and a PAC-Bayesian theoretical analysis. Overall, our results suggest that effective SFT data selection should adapt difficulty to both data scale and base model capability, rather than relying on fixed filtering heuristics.

Although our work establishes the existence of an optimal data difficulty, identifying it efficiently in practice remains an open problem. In synthetic settings, where the data distribution is low-perplexity and well structured, a simple fitted model can approximate the optimal difficulty. However, real-world data are substantially more complex, making such estimation significantly more challenging. An important direction for future work is to develop efficient methods for estimating the optimal difficulty as a function of the base model, data budget, and target task.

More broadly, our findings suggest a concrete principle for data selection in supervised fine-tuning: rather than uniformly favoring easy or hard data, practitioners should match data difficulty to both model capability and available training scale. From a practical perspective, this suggests that adaptive search strategies over data difficulty may provide an effective way to optimize SFT performance under a fixed data budget, analogous to how regularization and model capacity are selected in classical machine learning. Our framework also offers a useful perspective for understanding why approaches such as data mixture and curriculum learning are often effective, which may help guide future research in these directions.



\section*{Impact Statement}

This paper advances the understanding of data selection in supervised fine-tuning for large language models. We focus on the joint effects of data difficulty and data size on learning and generalization, introducing methodological and theoretical analyses to improve training efficiency and model performance. There are many potential societal consequences of our work, none of which we feel must be specifically highlighted here.


\bibliography{example_paper}
\bibliographystyle{unsrtnat}

\newpage
\appendix
\onecolumn

\section{Detailed experiment setups}
\subsection{Setup for Real-World Experiments.}
\label{appendix:real_World}
\paragraph{Datasets and Difficulty Stratification.}

Our training data is derived from the Chain-of-Thought (CoT) subset of OpenMath \cite{moshkov2025aimo2}, comprising approximately 3.2M mathematical reasoning problems, with step-by-step reasoning solutions. Using ground-truth CoT token length as a proxy for reasoning complexity, we partition this subset into seven distinct non-overlapping difficulty buckets: $[0,1\text{k})\allowbreak,
[1\text{k},4\text{k})\allowbreak,
[4\text{k},6\text{k})\allowbreak,
[6\text{k},8\text{k})\allowbreak,
[8\text{k},12\text{k})\allowbreak,
[12\text{k},16\text{k})\allowbreak,
[16\text{k},20\text{k})$. From each bucket, we curate a randomized pool of 128k samples and uniformly sample subsets of size $N \in \{1\text{k}, 2\text{k}, 4\text{k}, 8\text{k}, 16\text{k}, 32\text{k}, 64\text{k}, 128\text{k}\}$ for scaling experiments. This stratified design ensures that comparisons across data sizes are performed under identical difficulty distributions, eliminating confounding effects from content distribution shifts.

\paragraph{Training and Evaluation Settings.}
We conduct supervised fine-tuning (SFT) experiments using Qwen2.5-Math-1.5B and Qwen2.5-Math-7B \cite{yang2024qwen2}. To support long-context supervision, we increase the RoPE theta to 40{,}000, and expand the window size to 32{,}768.

All experiments are implemented using the LLaMA-Factory \cite{zheng2024llamafactory} framework. We use the AdamW optimizer with a base learning rate of $5\times10^{-5}$, a constant learning rate schedule with a warmup ratio of 0.03, and a global batch size of 64. All models are trained for a single epoch. To support the extensive Chain-of-Thought trajectories in the hardest difficulty buckets, we set both the maximum training sequence length and maximum generation length to 20,480 tokens, preventing truncation in either phase. All models utilize the default chat template with explicit Chain-of-Thought prompting to elicit step-by-step reasoning. 

For evaluation, we employ the vLLM inference engine  integrated with the Math-Verify package.
Our assessment covers three widely adopted benchmarks spanning diverse difficulty levels: Math500~\cite{hendrycks2021measuring}, AIME24~\cite{aime2024dataset}, and Minerva Math~\cite{lewkowycz2022solving}.
We report the expected \emph{pass@1} accuracy using nucleus sampling (temperature $0.6$, top-$p=0.95$).
To ensure statistical stability, performance is averaged over multiple independent generations per problem ($N=4$ for Math500, $N=32$ for AIME 2024, and $N=8$ for Minerva Math).

\subsection{An overview of iGSM data and iGSM setups}
\label{appendix:iGSM}
The iGSM dataset \cite{ye2024physics} is a synthetic math dataset designed to emulate the structure and reasoning challenges of GSM8K \cite{cobbe2021gsm8k}, while being significantly more scalable and controllable. Each problem in iGSM is generated to capture dependencies among parameters in multi-step arithmetic problems. Specifically, iGSM models three types of dependencies among parameters: direct dependencies, instance dependencies, and implicit dependencies, which require hierarchical reasoning and aggregation over multiple entities.

The data generation process consists of two main steps:

\begin{enumerate}
    \item \textbf{Graph Construction:} Each problem is built from a hierarchical category structure and a dependency graph, where instance parameters are connected by directed edges indicating dependencies. Abstract parameters are inherited from the structure and are not explicitly assigned values.
    \item \textbf{Problem and Solution Generation:} Problems are described in English sentences corresponding to the dependency graph. Solutions follow a Chain-of-Thought (CoT) format, computing parameters in topological order.
\end{enumerate}




\begin{figure*}[htbp]
    \centering
    \begin{tcolorbox}[title=iGSM dataset]
    \small
    \textbf{Problem:}\\
     The number of each Rucksack's Label Maker equals 22. The number of each Packable Travel Backpack's Hole Puncher equals 15. The number of each Packable Travel Backpack's Calculator equals 7. The number of each Packable Travel Backpack's Binder Clip equals the difference of each Multi-Day Pack's Stationery and each Rucksack's Stationery. The number of each Multi-Day Pack's Calculator equals 8 times as much as each Packable Travel Backpack's Calculator. The number of each Rucksack's Hole Puncher equals 22 times as much as the sum of each Multi-Day Pack's Calculator and each Packable Travel Backpack's Calculator. How many Hole Puncher does Rucksack have?

    \vspace{0.6em}
    \textbf{Answer:} 6

    \vspace{0.6em}
    \textbf{Solution:}\\
    Define Packable Travel Backpack's Calculator as L; so L = 7. Define Multi-Day Pack's Calculator as Z; so Z = 8 * L = 8 * 7 = 10. Define Rucksack's Hole Puncher as g; Q = Z + L = 10 + 7 = 17; so g = 22 * Q = 22 * 17 = 6.
    \end{tcolorbox}
    \caption{An example from the iGSM dataset}
    \label{fig:math_test_consecutive_cube}
\end{figure*}

The iGSM dataset allows fine-grained control over problem difficulty by varying two aspects: the number of operations in the solution (denoted as $op$) and the number of edges in the dependency graph. For simplicity, in our work we fix the number of edges according to
\[
\text{\#edges} = \left\lfloor op \cdot \frac{4}{3} \right\rfloor + 1,
\]
so that difficulty is effectively controlled by $op$. Notice that in the iGSM setup, the problem length grows linearly with the number of operations, which is consistent with our length-based difficulty control discussed in previous sections.

In the iGSM experiments, all models are trained with a batch size of 16 and a cosine learning rate schedule with a warmup ratio of 0.03. During the mid-training stage, we use a learning rate of $5 \times 10^{-5}$ and train for 1 epoch.  For the SFT and DFT experiments, the models are trained for 3 epochs. Specifically, SFT experiments use a learning rate of $5 \times 10^{-5}$, while DFT experiments use a learning rate of $1 \times 10^{-5}$.  The test data, midtrain data, and training data are sampled independently using different random seeds. All reported performance metrics are averaged over three random seeds. 

\section{Beyond CoT Length: Loss and Pass rate as Difficulty Metric}
\label{appendix:difficulty_metric}

Although data difficulty is an intuitive and widely used characteristic of data, prior work has not provided an explicit definition of this concept, and no universal difficulty metric exists. To ensure consistency across metrics, we consider three intuitive and commonly adopted measures: Chain-of-Thought (CoT) length, the negative log-likelihood (NLL) of the base model on the CoT solution, and the pass rate of a reference model on the problem. In the main experiments, we primarily use CoT length due to its simplicity and widespread adoption in previous work~\cite{cheng2025adaptivellm, yu2025long}. Here, we additionally present results using the other two metrics.

For the NLL-based experiment, we compute the per-sample NLL on the OpenMath dataset using a frozen Qwen2.5-Math-1.5B model, and partition the data into seven disjoint intervals: $[0.4, 0.6)$, $[0.6, 0.8)$, $[0.8, 1.0)$, $[1.0, 1.2)$, $[1.2, 1.4)$, $[1.4, 1.8)$ and $[1.8, 2.2)$. For the pass rate-based experiment, we use the pass rate annotations provided by the OpenMath dataset, which are obtained from 32 sampled generations of Qwen2.5-Math-72B-Instruct. We define the corresponding failure rate (1-pass rate), and partition the data into four disjoint intervals: $[0.00, 0.25)$, $[0.25, 0.50)$, $[0.50, 0.75)$, and $[0.75, 1.00]$. Then, following Section~\ref{sec:real_world_exp}, we replicate the same scaling protocol: for each difficulty bucket, we construct a randomized pool of 128k examples and sample nested subsets of increasing sizes ($N \in \{1\text{k}, \dots, 128\text{k}\}$). All training hyperparameters are kept identical to the primary experimental setup.

\vspace{-0.5em}
\paragraph{Results.} Figure~\ref{fig:metrics} shows performance as a function of data size across difficulty intervals defined by NLL and failure rate. The results closely mirror those obtained using length-based difficulty: performance consistently scales with data volume, while the scaling behavior is strongly influenced by data difficulty.

\vspace{-0.5em}
\begin{enumerate}
    \item \textbf{Existence of an interior optimum.}  
    Consistent with the observations in the main text, downstream performance does not improve monotonically as difficulty decreases. For a fixed data budget, accuracy typically peaks at an intermediate difficulty range, indicating that neither the easiest examples (lowest loss / highest pass rate) nor the hardest examples (highest loss / lowest pass rate) are optimal for supervised fine-tuning.

    \item \textbf{Data-dependent shift of optimal difficulty.}  
    We observe a clear shift in the optimal difficulty interval as the training dataset size increases. In the low-data regime, performance is maximized by relatively easy examples (e.g., loss interval \([0.6, 0.8)\) or failure-rate interval \([0.00, 0.25)\)), where learning is primarily limited by insufficient effective supervision. As the data budget increases, the optimal region shifts toward more difficult examples (e.g., loss interval \([1.2, 1.4)\) or failure-rate interval \([0.25, 0.50)\)), since sufficient data allow the model to better control the generalization gap and benefit from the richer learning signals provided by harder examples.
\end{enumerate}

\vspace{-0.5em}

This consistency indicates that \textbf{the trade-off between data quantity and difficulty is intrinsic to SFT, not specific to the choice of difficulty metric}.

\begin{figure*}[t]
\centering

\begin{minipage}{0.48\textwidth}
\centering
\includegraphics[width=\linewidth]{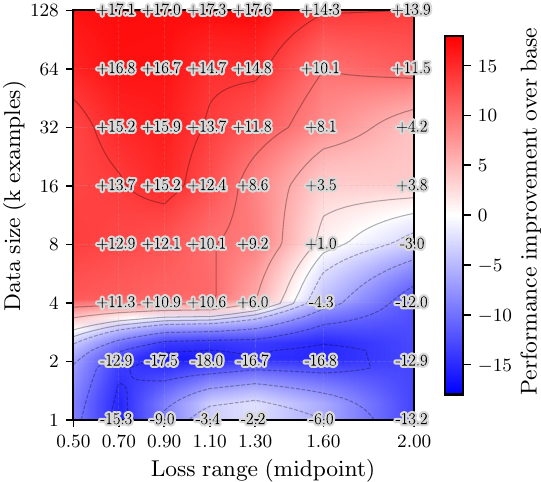}
\captionof*{figure}{(a) NLL loss}
\end{minipage}
\hfill
\begin{minipage}{0.48\textwidth}
\centering
\includegraphics[width=\linewidth]{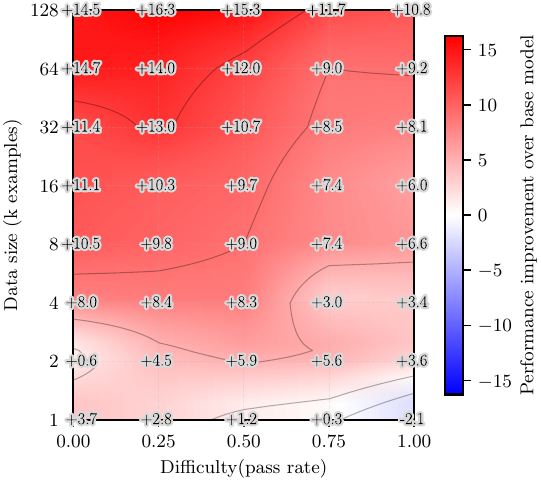}
\captionof*{figure}{(b) Failure rate}
\end{minipage}
\vskip 0.1in
\caption{Performance gain over base model as a function of data size and difficulty, trained on the OpenMath dataset, using NLL loss and failure rate as difficulty metrics. Performance is evaluated by the averaged accuracy (\%) on Math500, AIME24, and Minerva Math. }\vskip -0.15in
\label{fig:metrics}

\end{figure*}

\section{Extension to Llama Models and Science Tasks}
\label{appendix:extension_exp}
To evaluate the generality of our findings, we further extend our experiments along two dimensions: model family and task domain. Specifically, we conduct additional experiments using the Llama-3.2-3B model~\cite{grattafiori2024llama} and on science reasoning tasks using the OpenScience dataset~\cite{open_science_reasoning_2}. 

For the model extension experiment, we train Llama-3.2-3B on the OpenMath dataset and evaluate performance using the average accuracy over Math500, AIME24, and Minerva Math. For the domain extension experiment, we train Qwen2.5-Math-1.5B on the OpenScience dataset and evaluate performance on MMLU and GPQA. In both cases, data difficulty is measured using CoT length. We follow the same scaling protocol as in the main experiments: from each difficulty bucket, we construct a randomized pool of 128k examples and sample nested subsets of increasing sizes, while keeping all training hyperparameters unchanged. For science evaluation, we report averaged accuracy over multiple independent generations per problem ($N=2$ for MMLU and $N=4$ for GPQA).

As shown in Figure~\ref{fig:extension_exps}, both experiments exhibit trends highly consistent with those observed in the main experiments on Qwen models and math reasoning tasks. In particular, we continue to observe a non-monotonic relationship between difficulty and downstream performance, as well as a shift of the optimal difficulty region with increasing data size. These results suggest that our findings are robust across both model families and application domains, rather than being artifacts specific to a particular model or domain.

\begin{figure*}[t]
\centering

\begin{minipage}{0.48\textwidth}
\centering
\includegraphics[width=\linewidth]{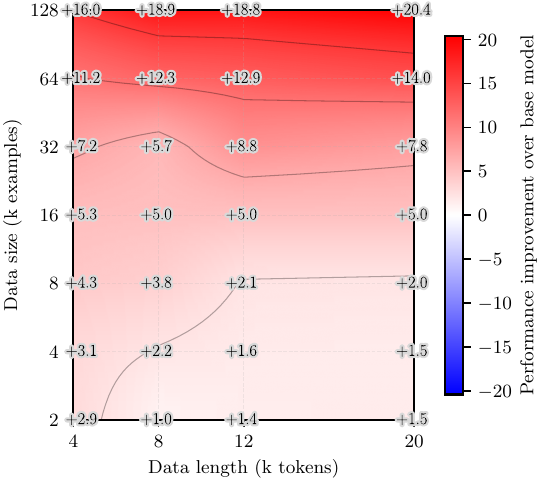}
\captionof*{figure}{(a) Llama-3.2-3B model trained on the OpenMath dataset, evaluated by the averaged accuracy (\%) on Math500, AIME24, and Minerva Math. }
\label{fig:real_world_exp_1.5b}
\end{minipage}
\hfill
\begin{minipage}{0.48\textwidth}
\centering
\includegraphics[width=\linewidth]{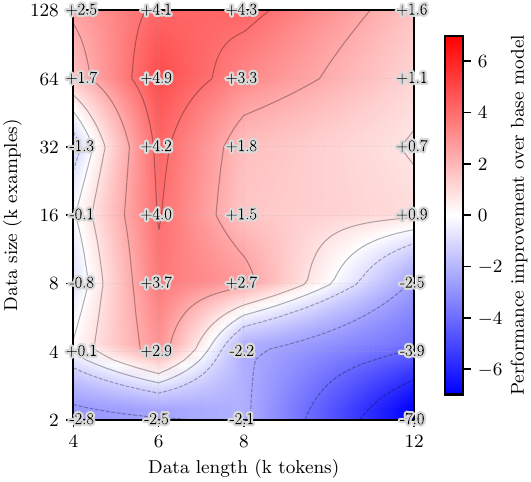}
\captionof*{figure}{(b) Qwen2.5-Math-1.5B model trained on the OpenScience dataset, evaluated by the averaged accuracy (\%) on MMLU and GPQA. }
\label{fig:difficulty_metric}
\end{minipage}
\vskip 0.1in
\caption{Extension experiments on Llama models and science reasoning tasks. Data difficulty is measured using CoT length.}\vskip -0.15in
\label{fig:extension_exps}

\end{figure*}

\section{A Proof of Proposition~\ref{prop:two_gap_formal}}
\label{appendix:two_gap_theory}

To prove Proposition~\ref{prop:two_gap_formal}, we divide the discrepancy $\mathbb{E}_{\theta\sim\pi_\mathrm{train}}\big[R_{\mathcal{D}_{\mathrm{test}}}(\theta)\big]-\mathbb{E}_{\theta\sim\pi_\mathrm{train}}\!\big[\widehat R_S(\theta)\big]$ into two gaps: (i) in-distribution generalization under $\mathcal{D}_{\mathrm{train}}$ and
(ii) extrapolation from $\mathcal{D}_{\mathrm{train}}$ to $\mathcal{D}_{\mathrm{test}}$,
which can be upper-bounded respectively by Lemma~\ref{lem:pb_id} and Lemma~\ref{lem:tv_shift}. We state the extrapolation term first in total variation distance, which is the more primitive form. The full KL version of the two-gap bound follows as a corollary via Pinsker's inequality, and is most meaningful when $\mathcal{D}_{\mathrm{test}}$ is absolutely continuous with respect to $\mathcal{D}_{\mathrm{train}}$.

\begin{lemma}[PAC-Bayes in-distribution bound; Theorem~3.1 in \citet{Alquier_2024}]
\label{lem:pb_id}
Under the assumptions of Proposition~\ref{prop:two_gap_formal}, fix any $\delta\in(0,1)$. With probability at least $1-\delta$ over the draw of $S$,
the following holds simultaneously for all posteriors $\pi_\mathrm{train}$ over $\Theta$:
\begin{align}
\mathbb{E}_{\theta\sim\pi_\mathrm{train}}\!\big[R_{\mathcal{D}_{\mathrm{train}}}(\theta)\big]
  \;\le\;
  \mathbb{E}_{\theta\sim\pi_\mathrm{train}}\!\big[\widehat R_S(\theta)\big] \nonumber
  +C\sqrt{
    \frac{
      \mathrm{KL}(\pi_\mathrm{train}\|\pi_\mathrm{pre}) + \ln\!\left(\frac{1}{\delta}\right) + \frac{5}{2}\ln(n) + 8
    }{2n-1}
  }.
  \label{eq:pb_id_app}
\end{align}
\end{lemma}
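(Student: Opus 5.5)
The plan is to reprove the standard Catoni-type PAC-Bayes bound from \citet{Alquier_2024}: a Donsker--Varadhan change of measure combined with an exponential moment bound for a fixed inverse temperature $\lambda$, followed by a union bound over a grid of $\lambda$'s. The grid step is needed because the optimal $\lambda$ depends on the data-dependent quantity $\mathrm{KL}(\pi_\mathrm{train}\|\pi_\mathrm{pre})$.

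\textbf{Step 1 (fixed $\lambda$).} Fix $\lambda>0$ and $\theta\in\Theta$. Since $\ell(\theta;z)\in[0,C]$ and the $z_i$ are i.i.d.\ from $\mathcal{D}_{\mathrm{train}}$, Hoeffding's lemma gives
\[
\mathbb{E}_S\exp\bigl(\lambda(R_{\mathcal{D}_{\mathrm{train}}}(\theta)-\widehat R_S(\theta))\bigr)\le \exp\bigl(\lambda^2C^2/(8n)\bigr).
\]
Because $\pi_\mathrm{pre}$ does not depend on $S$, I integrate over $\theta\sim\pi_\mathrm{pre}$ and swap the order by Fubini. Markov's inequality then yields, with probability at least $1-\delta$,
\[
\mathbb{E}_{\theta\sim\pi_\mathrm{pre}}\exp\bigl(\lambda(R_{\mathcal{D}_{\mathrm{train}}}-\widehat R_S)\bigr)\le \exp\bigl(\lambda^2C^2/(8n)\bigr)/\delta .
\]

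\textbf{Step 2 (change of measure).} On this event, the Donsker--Varadhan variational formula $\mathbb{E}_\rho[f]\le \mathrm{KL}(\rho\|\pi_\mathrm{pre})+\ln\mathbb{E}_{\pi_\mathrm{pre}}e^{f}$ applies to every posterior $\rho$ simultaneously. Taking $f=\lambda(R-\widehat R)$, this gives for all $\pi_\mathrm{train}$
\[
\mathbb{E}_{\pi_\mathrm{train}}[R_{\mathcal{D}_{\mathrm{train}}}]\le \mathbb{E}_{\pi_\mathrm{train}}[\widehat R_S]+\frac{\lambda C^2}{8n}+\frac{\mathrm{KL}(\pi_\mathrm{train}\|\pi_\mathrm{pre})+\ln(1/\delta)}{\lambda}.
\]
Simultaneity over posteriors is automatic here, since the high-probability event does not involve $\pi_\mathrm{train}$.

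\textbf{Step 3 (optimizing $\lambda$; the main obstacle).} The optimal choice is $\lambda^\star=\sqrt{8n(\mathrm{KL}+\ln(1/\delta))}/C$. This value depends on $S$ through the KL term, so it cannot be fixed before the data are drawn. I would handle this as follows:
\begin{itemize}
\item Choose a finite grid $\{\lambda_k\}$, e.g.\ geometric in $k$ and covering the range where the bound is nontrivial, of size $O(n)$. The KL can be capped, since once KL exceeds order $n$ the bound exceeds $C$ and holds trivially.
\item Apply Step 1 at each $\lambda_k$ with confidence level $\delta_k$, chosen so that $\sum_k\delta_k\le\delta$. A union bound then costs an additive $\ln(1/\delta_k)\approx\ln(1/\delta)+O(\ln n)$.
\item On the resulting event, pick the grid point closest to $\lambda^\star$ and pay a constant-factor rounding loss.
\end{itemize}
Getting exactly the constants $\tfrac52\ln n+8$ and the denominator $2n-1$ requires reproducing the specific grid and bookkeeping of Theorem~3.1 in \citet{Alquier_2024}. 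I would follow that bookkeeping, or simply cite it, rather than re-derive it. Any reasonable grid gives the same bound up to these constants, which is all the two-gap interpretation of Proposition~\ref{prop:two_gap_formal} uses.
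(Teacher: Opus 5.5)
Falling back on the citation is exactly what the paper does. Its proof of this lemma is just Theorem~3.1 of \citet{Alquier_2024} (McAllester's bound for $[0,1]$-valued losses) applied to $\ell/C$.

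Your self-contained route, however, does not reach the stated inequality as sketched. Steps~1--2 are fine. The problem is in Step~3, where you accept a constant-factor rounding loss. Write $\lambda_k=t\lambda^\star$ and $A=\mathrm{KL}(\pi_{\mathrm{train}}\|\pi_{\mathrm{pre}})+\ln(N/\delta)$. Then the Step~2 bound equals $C\sqrt{A/(2n)}\,\cosh(\ln t)$. So all of $A$, including the KL term, is multiplied by $\cosh^2(\ln t)=1+\sinh^2(\ln t)$ under the root.

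The target has coefficient one on KL. The excess $\sinh^2(\ln t)\,\mathrm{KL}$ cannot be absorbed into the additive $\tfrac{5}{2}\ln n+8$ when KL is of order $n$. That case still lies in the nontrivial regime $\mathrm{KL}+\ln(1/\delta)<2n-1-\tfrac{5}{2}\ln n-8$. So it is not true that any reasonable grid gives the bound up to these constants: a coarse grid changes the multiplicative constant on KL, not just the additive ones.

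Nor is there grid bookkeeping in Theorem~3.1 to copy. Optimizing a Catoni-type bound over $\lambda$ always yields the denominator $2n$ (from $\lambda C^2/(8n)$), never $2n-1$. The $2n-1$ is the signature of a different argument.

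Either of two repairs works.

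\emph{(a) Keep your route but make the rounding loss $1+O(1/n)$ rather than constant.}
\begin{itemize}
\item Take $N=n$. In the nontrivial regime $A\in(\ln n,2n)$, so $\lambda^\star$ ranges over an interval of log-length $O(\ln n)$.
\item Your $n$-point geometric grid then has log-spacing $O(\ln n/n)$, so $A\sinh^2(\ln t)=O((\ln n)^2/n)$.
\item The total overhead $\ln N+A\sinh^2(\ln t)=\ln n+O(1)$ fits under $\tfrac{5}{2}\ln n+8$, and $1/(2n)\le 1/(2n-1)$.
\end{itemize}
This computation is the heart of the matter and has to actually be carried out.

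\emph{(b) Drop the grid entirely.}
\begin{itemize}
\item For fixed $\theta$, set $\Delta=R_{\mathcal{D}_{\mathrm{train}}}(\theta)-\widehat R_S(\theta)$. Hoeffding's tail bound $\Pr(\Delta\ge\epsilon)\le e^{-2n\epsilon^2/C^2}$ integrates to $\mathbb{E}_S\exp\bigl((2n-1)\Delta_+^2/C^2\bigr)\le 2n$.
\item Run your Fubini, Markov and Donsker--Varadhan steps on $f=(2n-1)\Delta_+^2/C^2$ instead of $\lambda\Delta$, then apply Jensen via $\mathbb{E}\Delta\le\mathbb{E}\Delta_+\le\sqrt{\mathbb{E}\Delta_+^2}$.
\item This gives $\mathbb{E}_{\pi_{\mathrm{train}}}[\Delta]\le C\sqrt{(\mathrm{KL}+\ln(1/\delta)+\ln(2n))/(2n-1)}$ simultaneously for all posteriors. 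Finally, $\ln(2n)\le\tfrac{5}{2}\ln n+8$.
\end{itemize}
Route (b) is the natural one: it needs no union bound and produces the $2n-1$ directly. Either repair makes the lemma genuinely self-contained, which the paper's citation-only proof is not.
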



\begin{tcolorbox}[
  colback=gray!8,
  colframe=gray!60,
  boxrule=0.5pt,
  arc=2pt,
  left=6pt,right=6pt,top=4pt,bottom=4pt
]
\paragraph{Remark (The Pac-Bayes Bound).} This bound was originally studied in \citet{10.1145/279943.279989} on 0-1 loss functions, and was later generalized to all loss functions taking values in $[0, 1]$ in \citet{maurer2004notepacbayesiantheorem}. Here, we use the form restated in \citet{Alquier_2024}. Although the original bound was stated for $C=1$, applying the original bound to a rescaled version of the loss functions yields the result for arbitrary $C$.
\end{tcolorbox}

Next, we move on to the extrapolation control from $\mathcal{D}_{\mathrm{train}}$ to $\mathcal{D}_{\mathrm{test}}$.

\begin{lemma}
\label{lem:tv_shift}
Under the assumptions of Proposition~\ref{prop:two_gap_formal}, for any fixed $\theta\in\Theta$,
\begin{equation}
  \big|
    R_{\mathcal{D}_{\mathrm{test}}}(\theta)-R_{\mathcal{D}_{\mathrm{train}}}(\theta)
  \big|
  \leq C \cdot \mathrm{TV}\!\big(\mathcal{D}_{\mathrm{test}},\mathcal{D}_{\mathrm{train}}\big).
  \label{eq:tv_shift_app}
\end{equation}

\end{lemma}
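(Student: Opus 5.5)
The plan is to write the risk difference as an integral of the loss against the signed measure $\mathcal{D}_{\mathrm{test}}-\mathcal{D}_{\mathrm{train}}$, and then use the fact that the loss is bounded. Fix $\theta$ and set $f(z):=\ell(\theta;z)$; by assumption $f$ takes values in $[0,C]$. By the definition of population risk in \eqref{eq:risk_defs},
\[
R_{\mathcal{D}_{\mathrm{test}}}(\theta)-R_{\mathcal{D}_{\mathrm{train}}}(\theta)=\int_{\mathcal{Z}} f\,d\mathcal{D}_{\mathrm{test}}-\int_{\mathcal{Z}} f\,d\mathcal{D}_{\mathrm{train}}.
\]
Both integrals are finite because $f$ is bounded and measurable. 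We use the convention $\mathrm{TV}(P,Q)=\sup_{A}|P(A)-Q(A)|$.

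\textbf{Step 1: choose a density representation.} Take a common dominating measure $\mu$, for example $\mu=\mathcal{D}_{\mathrm{test}}+\mathcal{D}_{\mathrm{train}}$, so that both distributions have densities $p$ and $q$ with respect to $\mu$. This avoids any absolute-continuity assumption between test and train, which is exactly why the total variation form is the primitive one.

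\textbf{Step 2: split into positive and negative parts.} Let $A=\{p>q\}$. Then
\[
\int f(p-q)\,d\mu=\int_A f(p-q)\,d\mu-\int_{A^c} f(q-p)\,d\mu .
\]
Each integrand on the right is nonnegative and $f\le C$, so each term lies between $0$ and $C\int_A(p-q)\,d\mu$. For $A^c$ we use the identity $\int_{A^c}(q-p)\,d\mu=\int_A(p-q)\,d\mu$, which holds because both densities integrate to $1$. The difference of two numbers in $[0,M]$ has absolute value at most $M$, so
\[
\Big|\int f(p-q)\,d\mu\Big|\le C\int_A(p-q)\,d\mu = C\,\big(\mathcal{D}_{\mathrm{test}}(A)-\mathcal{D}_{\mathrm{train}}(A)\big)\le C\cdot\mathrm{TV}(\mathcal{D}_{\mathrm{test}},\mathcal{D}_{\mathrm{train}}).
\]
This gives the claimed inequality.

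\textbf{Main obstacle: the TV normalization.} The only delicate point is the convention. If TV were taken as $\tfrac12\|P-Q\|_1$ (which equals the sup-over-sets form), the argument above is tight and gives constant $C$. A cruder argument that only uses $|f|\le C$ would give $C\|P-Q\|_1=2C\cdot\mathrm{TV}$, losing a factor of $2$. The factor is saved by the nonnegativity of $f$: equivalently, one can shift $f$ to $f-C/2$, which has the same integral against $p-q$ and sup-norm $C/2$. I would state the convention explicitly so the constant $C$ matches.

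Once the lemma is established, Proposition~\ref{prop:two_gap_formal} follows in two moves. First, integrate the lemma over $\theta\sim\pi_{\mathrm{train}}$; this is valid for every posterior because the bound is uniform in $\theta$. Second, add the result to Lemma~\ref{lem:pb_id}.
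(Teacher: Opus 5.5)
Your proof is correct and follows the paper's route. The paper also fixes $\theta$, sets $f=\ell(\theta;\cdot)\in[0,C]$, and invokes the variational characterization of total variation under the convention $\mathrm{TV}(P,Q)=\sup_A|P(A)-Q(A)|$; your split of the density difference over $A=\{p>q\}$ is a self-contained proof of that characterization, and it makes explicit the point the paper leaves implicit: the range condition $f\in[0,C]$ (rather than just $|f|\le C$) is what gives constant $C$ instead of $2C$.
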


\begin{proof}
Fix $\theta\in\Theta$ and write $f(z):=\ell(\theta;z)\in[0,C]$.
By the variational characterization of total variation distance,
\begin{equation}
  \big|
    \mathbb{E}_{z\sim\mathcal{D}_{\mathrm{test}}}[f(z)]
    -
    \mathbb{E}_{z\sim\mathcal{D}_{\mathrm{train}}}[f(z)]
  \big|
  \leq
  C \cdot \mathrm{TV}\!\big(\mathcal{D}_{\mathrm{test}},\mathcal{D}_{\mathrm{train}}\big),
  \label{eq:tv_var_app}
\end{equation}
where
\(
\mathrm{TV}(P,Q):=\sup_{A\subseteq\mathcal{Z}}|P(A)-Q(A)|.
\)
\end{proof}

Finally, combining Lemma~\ref{lem:pb_id} and Lemma~\ref{lem:tv_shift} gives exactly Proposition~\ref{prop:two_gap_formal}. Corollary~\ref{cor:kl_shift} gives the KL-form version of the same two-gap upper bound.

\begin{corollary}[Two-gap KL upper bound]
\label{cor:kl_shift}
Fix $\delta\in(0,1)$ and $n\in\mathbb{N}$.
Let $\pi_{\mathrm{pre}}$ be a distribution over $\Theta$ that is independent of the sample set $S$.
Assume the loss is uniformly bounded: there exists $C>0$ such that
$\ell(\theta;z)\in[0,C]$ for all $\theta\in\Theta$ and $z\in\mathcal{Z}$.
Draw $S=\{z_i\}_{i=1}^n \stackrel{\mathrm{i.i.d.}}{\sim} \mathcal{D}_{\mathrm{train}}^n$.
If $\mathrm{KL}(\mathcal{D}_{\mathrm{test}}\|\mathcal{D}_{\mathrm{train}})$ is finite, then with probability at least $1-\delta$ over the draw of $S$, the following holds simultaneously for all posteriors $\pi_{\mathrm{train}}$ over $\Theta$ (which may depend on $S$):
\begin{align}
  &\mathbb{E}_{\theta\sim\pi_\mathrm{train}}\!\big[ R_{\mathcal{D}_{\mathrm{test}}}(\theta) \big]
  \;\le\;
  \mathbb{E}_{\theta\sim\pi_\mathrm{train}}\!\big[ \widehat R_{S}(\theta) \big]
  \nonumber\\
  &+
  C\sqrt{
    \frac{
      \mathrm{KL}(\pi_\mathrm{train}\|\pi_\mathrm{pre})
      + \ln\!\left(\frac{1}{\delta}\right)
      + \frac{5}{2}\ln(n) + 8
    }{2n-1}
  }
  \nonumber\\
  &+
  C\sqrt{\frac{1}{2}\,\mathrm{KL}\!\big(\mathcal{D}_{\mathrm{test}}\|\mathcal{D}_{\mathrm{train}}\big)}.
  \label{eq:two_gap_kl_app}
\end{align}
\end{corollary}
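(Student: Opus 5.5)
The plan is to start from Proposition~\ref{prop:two_gap_formal}, which has already been obtained by combining Lemma~\ref{lem:pb_id} with Lemma~\ref{lem:tv_shift}. Corollary~\ref{cor:kl_shift} has exactly the same hypotheses, high-probability event, and first two terms. So the only task is to replace the extrapolation term $C\cdot\mathrm{TV}(\mathcal{D}_{\mathrm{test}},\mathcal{D}_{\mathrm{train}})$ by its KL upper bound. No new probabilistic argument is needed, since the KL term does not depend on $S$ or on $\pi_{\mathrm{train}}$.

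First, I will make the integration step in Proposition~\ref{prop:two_gap_formal} explicit. Lemma~\ref{lem:tv_shift} holds pointwise in $\theta$, so taking $\mathbb{E}_{\theta\sim\pi_{\mathrm{train}}}$ of $R_{\mathcal{D}_{\mathrm{test}}}(\theta)\le R_{\mathcal{D}_{\mathrm{train}}}(\theta)+C\,\mathrm{TV}$ preserves the inequality. The risks are bounded in $[0,C]$, so all expectations are finite. Adding this to the bound of Lemma~\ref{lem:pb_id} gives the inequality on the same event of probability at least $1-\delta$, simultaneously for all posteriors.

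Second, I will invoke Pinsker's inequality, $\mathrm{TV}(P,Q)\le\sqrt{\tfrac12\mathrm{KL}(P\|Q)}$, with $P=\mathcal{D}_{\mathrm{test}}$ and $Q=\mathcal{D}_{\mathrm{train}}$. Here TV is defined as $\sup_A|P(A)-Q(A)|$, as in Lemma~\ref{lem:tv_shift}, and KL uses natural logarithms. The finiteness assumption guarantees $\mathcal{D}_{\mathrm{test}}\ll\mathcal{D}_{\mathrm{train}}$, so the right-hand side is a meaningful real number. Multiplying by $C>0$ and substituting into the bound of Proposition~\ref{prop:two_gap_formal} yields \eqref{eq:two_gap_kl_app}.

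The only step needing care is the Pinsker constant convention. With $\mathrm{TV}=\sup_A|P(A)-Q(A)|$, Pinsker gives exactly the factor $\tfrac12$. I will verify this rather than use the $L^1$ normalization, which would introduce a factor of $2$. I do not expect any real obstacle: measurability of $\ell$ in $z$ is implicit in the definition of the risks, and everything else is a monotone substitution.
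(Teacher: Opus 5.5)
Your proposal is correct and matches the paper's proof: combine Lemma~\ref{lem:pb_id} with Lemma~\ref{lem:tv_shift}, integrated over $\pi_{\mathrm{train}}$, to obtain Proposition~\ref{prop:two_gap_formal}. Then relax the TV term by Pinsker's inequality $\mathrm{TV}\le\sqrt{\tfrac12\mathrm{KL}}$, using the $\sup_A$ normalization of TV. Your checks on the integration step and on the Pinsker constant are sound and stay within the paper's argument.
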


\begin{proof}
This follows by combining Lemma~\ref{lem:pb_id} with Lemma~\ref{lem:tv_shift} and then applying Pinsker's inequality to the TV term.
\end{proof}

\paragraph{Discussion of the TV and KL forms.}
The TV bound in Proposition~\ref{prop:two_gap_formal} is the more primitive statement: it follows directly from the variational characterization of total variation distance and only requires bounded losses. It is also well-defined under large distribution shifts, including cases where the support of $\mathcal{D}_{\mathrm{test}}$ is not fully covered by $\mathcal{D}_{\mathrm{train}}$. By contrast, the KL form in Corollary~\ref{cor:kl_shift} is obtained by relaxing TV with Pinsker's inequality. This form is useful for exposition because it expresses both terms in familiar KL quantities and aligns with common domain-adaptation analyses~\citep{nguyen2021kl}. However, it should be interpreted as a corollary rather than the fundamental statement: when $\mathcal{D}_{\mathrm{test}}$ is far from $\mathcal{D}_{\mathrm{train}}$, or is not absolutely continuous with respect to it, the KL term may become loose or even vacuous. Thus, we use the TV bound as the main theorem-level control and the KL bound as an interpretable relaxation for moderate distribution shifts.

\end{document}